\documentclass[conference]{IEEEtran}

\makeatletter
\def\ps@IEEEtitlepagestyle{%
  \def\@oddfoot{\mycopyrightnotice}%
  \def\@evenfoot{}%
}
\def\mycopyrightnotice{%
  {\footnotesize\hfill}%
  \gdef\mycopyrightnotice{}
}
\makeatother

\usepackage{cite}
\usepackage{amsmath,amssymb,amsfonts}
\usepackage{amsthm}
\usepackage{graphicx}
\usepackage{booktabs}
\usepackage{textcomp}
\usepackage{xcolor}
\usepackage{microtype}
\usepackage{eso-pic}

\newtheorem{proposition}{Proposition}
\theoremstyle{definition}
\newtheorem{definition}{Definition}
\theoremstyle{remark}
\newtheorem{remark}{Remark}

\begin{document}

\title{\vspace*{1cm}The Price of Correlated Tests:\\
How Strict Should a Model Release Gate Be?}

\author{\IEEEauthorblockN{Marco Pollanen}
\IEEEauthorblockA{\textit{Department of Mathematics \& Statistics} \\
\textit{Trent University}\\
Peterborough, ON, Canada \\
marcopollanen@trentu.ca}
}


\maketitle

\begin{abstract}
Before a machine learning model ships, it often has to pass a suite of
automated tests. Requiring every test to pass looks safe, yet it can reject
many models that would have served users well, and it does not say how
trustworthy a passing model actually is. We treat the release gate as a design
problem: choose how many tests a model must pass so that cleared models meet a
stated reliability target, while keeping as many good models as possible. A
two-class latent-factor model makes both costs explicit and reduces each
calculation to a one-dimensional integral. We prove that when both classes
share the same latent correlation, a stricter gate always raises reliability,
so the gate that keeps the most good models is the most lenient one that still
meets the target. Under pass-all gating, any reliability target short of
perfection is attainable within the model, but the share of good models kept
tends to zero as the suite grows. Correlation between tests sets the price. In
one configuration, a 99 percent target needs 8 independent tests, but 74 tests at a latent correlation of 0.3 and 5,182 at
0.5, where the gate keeps fewer than one good model in ten. We also give a
validation procedure, built on exact binomial bounds, that certifies a gate
from labelled data even when the gate is chosen from a fixed shortlist.
\end{abstract}

\begin{IEEEkeywords}
model deployment, release gates, machine learning testing, correlated tests,
precision, sensitivity, confidence bounds
\end{IEEEkeywords}

\section{Introduction}
\label{sec:intro}

A training sweep produces hundreds of checkpoints. A release review must decide
which of them may be deployed, and it often does so with a fixed suite of
automated checks \cite{breck2017,sculley2015}. The gate can fail in two ways.
It can clear a model that violates the deployment requirement, or it can reject
a model that meets it. The common convention, ``every test must pass,'' guards
against the first failure and says nothing about the second.

How many of $k$ tests should a model be required to pass? The answer turns on
three quantities. The first is how often each test errs. The second is whether
the tests err \emph{together}: a flaw that slips past one benchmark tends to slip
past similar ones, and a good model that trips one check tends to trip related
checks. The third is the composition of the pool. When good candidates are rare,
even a small false-clearance rate can fill the released set with bad ones. With
1,000 flawed candidates for every 10 good ones, a 99\% reliability target
requires the gate to clear flawed candidates at no more than one ten-thousandth
of the rate at which it clears good ones.

Diagnostic medicine has long studied how to combine imperfect tests and how
conditional dependence distorts their apparent accuracy
\cite{vacek1985,marshall1989}; latent-class models with random effects represent
that dependence explicitly \cite{qu1996}. The equicorrelated one-factor model we
use also underlies portfolio credit-risk calculations \cite{vasicek2002}. In
machine learning, selective classification trades risk against coverage when a
model abstains on some inputs \cite{geifman2017}, and learn-then-test calibrates
such settings with finite-sample guarantees by treating their selection as
multiple testing \cite{angelopoulos2025}. Our setting differs in two respects:
the objects screened are candidate models, not inputs, and retention is measured
within the acceptable class rather than as the overall fraction accepted. Work on
leaderboard reuse addresses yet another problem, adaptive overfitting to a shared
test set \cite{blum2015,roelofs2019}. We contribute an analysis of correlated
benchmark pass counts for candidate-model release, in four parts:
\begin{itemize}
\item \emph{A design rule with a proof.} For an $r$-of-$k$ gate we compute both
the false-clearance rate and the rate at which good models survive. In a
Gaussian one-factor model with a common within-class correlation, precision
rises strictly with the required pass count for a fixed suite, and with suite
size under pass-all gating. Under pass-all gating, any precision target below
one is attainable, but sensitivity tends to zero as the suite grows. Among
$r$-of-$k$ gates meeting the target, the most lenient keeps the most good models.
\item \emph{The price of dependence, quantified.} Holding per-test error rates
fixed, raising latent correlation from $0$ to $0.5$ raises the pass-all suite
needed for 99\% precision from 8 tests to 5,182, and cuts the share of good
models kept from $66\%$ to under $10\%$.
\item \emph{A worked release decision.} In an illustrative sweep, misjudging the test
correlation by a factor of two changes the outcome more than moving the gate by
one test.
\item \emph{A certificate from data.} Exact binomial bounds with a union bound
give a simultaneous lower bound on precision, so a team can select a gate from
a fixed shortlist and still hold a valid guarantee. We also quantify how much
labelled data that requires.
\end{itemize}

Two scope notes. Our precision is the probability
that a candidate is acceptable given that it cleared, for a candidate drawn before
evaluation; it is not the probability that a winner picked afterwards from the
cleared set is acceptable. The tables are model calculations for stated
parameter values, not measurements of deployed pipelines.
Section~\ref{sec:certify} shows how to turn such calculations into guarantees.

\section{Model: From Test Outcomes to Reliable Clearance}
\label{sec:model}

\subsection{Candidates, tests and the gate}

Fix a deployment criterion, such as a quality or safety requirement, and call
each candidate \emph{acceptable} or \emph{unacceptable} with respect to it. This
label belongs to the candidate, not to the tests, which detect it only
imperfectly. The pool holds $n_+$ acceptable and $n_-$ unacceptable candidates,
with imbalance $\Omega=n_-/n_+$. Each candidate is evaluated on $k$ tests and
released if it passes at least $r$ of them. The tests are noisy indicators of
one common criterion. An $r$-of-$k$ rule is therefore not a licence to waive
requirements that are each mandatory in their own right; those belong in
separate pass-all checks.

\begin{definition}[Per-test error rates]
\label{def:rates}
The \emph{miss rate} $p$ is the probability that an unacceptable candidate passes
a given test. The \emph{false-alarm rate} $\eta$ is the probability that an
acceptable candidate fails a given test.
\end{definition}

These are Type~II and Type~I error rates measured against the deployment
criterion, not benchmark accuracies. A benchmark verdict is usually a thresholded
average over many items; that sampling noise is absorbed into $p$ and $\eta$.
Probabilities are taken over candidates within a class and over evaluation
randomness.

\subsection{Dependence between tests}

A shared latent factor captures the tendency of tests to err together. Let
$\phi$ and $\Phi$ denote the standard normal density and distribution function.
Candidate $i$ in class $\mathcal C\in\{+,-\}$ draws $\theta_i\sim\mathcal N(0,1)$
and receives latent scores
\begin{equation}
\label{eq:onefactor}
S_{ij}=\sqrt{\rho}\,\theta_i+\sqrt{1-\rho}\,\varepsilon_{ij},
\qquad\varepsilon_{ij}\sim\mathcal N(0,1)\ \text{i.i.d.},
\end{equation}
with the noise terms independent of $\theta_i$.
It passes test $j$ when $S_{ij}>c_{\mathcal C}$, with thresholds
$c_-=\Phi^{-1}(1-p)$ and $c_+=\Phi^{-1}(\eta)$ matching the per-test rates. For an
unacceptable candidate, $\theta_i$ measures how well its flaw hides from the
suite; for an acceptable one, how clearly the suite recognises its competence.
The correlation $\rho$ is common to both classes; Section~\ref{sec:certify}
relaxes this.

\begin{remark}[What $\rho$ is, and is not]
\label{rem:whatrho}
$\rho$ is the correlation of \emph{latent} scores \emph{within one class}. It is
not the Pearson correlation of benchmark percentages, not the correlation of
binary pass indicators, and not a correlation computed across a mixed population
of good and bad models. Even inside the model, at threshold zero the pass
indicators have correlation $\tfrac{2}{\pi}\arcsin\rho$, not $\rho$.
\end{remark}

\subsection{Clearance probabilities and precision}

\begin{proposition}[Two-sided clearance]
\label{prop:clear}
Let $\pi_{\mathcal C}(\theta)=\Phi\big((\sqrt{\rho}\,\theta-c_{\mathcal C})/\sqrt{1-\rho}\big)$.
Under \eqref{eq:onefactor}, a candidate of class $\mathcal C$ clears an
$r$-of-$k$ gate with probability
\begin{equation}
\label{eq:clear}
G_{\mathcal C}(r,k,\rho)=\int_{-\infty}^{\infty}
\Pr\big(\mathrm{Bin}(k,\pi_{\mathcal C}(\theta))\ge r\big)\,\phi(\theta)\,d\theta .
\end{equation}
\end{proposition}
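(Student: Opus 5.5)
The plan is to condition on the latent factor and then integrate it out. Fix a candidate of class $\mathcal C$ and condition on $\theta_i=\theta$. Under \eqref{eq:onefactor}, the only randomness left in the scores $S_{i1},\dots,S_{ik}$ comes from the noise terms $\varepsilon_{ij}$. These are i.i.d.\ and independent of $\theta_i$. So conditionally on $\theta_i=\theta$, the scores are i.i.d.\ normal with mean $\sqrt{\rho}\,\theta$ and variance $1-\rho$. The pass indicators $\mathbf 1\{S_{ij}>c_{\mathcal C}\}$ are fixed functions of these scores, so they are conditionally i.i.d.\ Bernoulli.

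Next, compute the common success probability. Since $\varepsilon_{ij}\sim\mathcal N(0,1)$ is symmetric,
\[
\Pr(S_{ij}>c_{\mathcal C}\mid\theta_i=\theta)=\Pr\!\Big(\varepsilon_{ij}>\tfrac{c_{\mathcal C}-\sqrt{\rho}\,\theta}{\sqrt{1-\rho}}\Big)=\Phi\!\Big(\tfrac{\sqrt{\rho}\,\theta-c_{\mathcal C}}{\sqrt{1-\rho}}\Big)=\pi_{\mathcal C}(\theta).
\]
The number of passes is therefore conditionally $\mathrm{Bin}(k,\pi_{\mathcal C}(\theta))$. The conditional probability of clearing the $r$-of-$k$ gate is $\Pr(\mathrm{Bin}(k,\pi_{\mathcal C}(\theta))\ge r)$.

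Finally, apply the law of total probability (the tower property) over $\theta_i\sim\mathcal N(0,1)$, which has density $\phi$. This gives \eqref{eq:clear}. The integrand is measurable and bounded by $1$, so the integral is well defined and no convergence issue arises.

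One check and one caveat remain. The check is that the thresholds reproduce the per-test rates of Definition~\ref{def:rates}. Marginally $S_{ij}\sim\mathcal N(0,1)$, so $\Pr(S_{ij}>c_-)=1-\Phi(c_-)=p$ and $\Pr(S_{ij}\le c_+)=\Phi(c_+)=\eta$, consistent with the construction. The caveat is the degenerate endpoints. At $\rho=1$ the factor $\sqrt{1-\rho}$ vanishes, and $\pi_{\mathcal C}$ should be read as the step function $\mathbf 1\{\theta>c_{\mathcal C}\}$, which is the limiting case. This is the only step needing care; the proof itself is a routine conditioning argument. I will assume $0\le\rho<1$ and remark on the limit separately.
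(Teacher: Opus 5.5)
Your proposal is correct and follows the paper's own proof: condition on $\theta_i=\theta$, note that the pass indicators are then i.i.d.\ Bernoulli with success probability $\pi_{\mathcal C}(\theta)$, so the pass count is $\mathrm{Bin}(k,\pi_{\mathcal C}(\theta))$, and integrate against $\phi$. Your extra steps (the explicit symmetry computation, the check that $c_{\pm}$ reproduce $p$ and $\eta$, and the remark on $\rho=1$) are correct additions that the paper leaves implicit.
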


\begin{proof}
Given $\theta_i=\theta$ the noise terms are independent and each test is passed
with probability $\pi_{\mathcal C}(\theta)$, so the pass count is
$\mathrm{Bin}(k,\pi_{\mathcal C}(\theta))$. Integrate over $\theta$.
\end{proof}

We write $q=G_-$ for the \emph{false-clearance} probability and $\sigma=G_+$ for
the \emph{sensitivity}, the share of good candidates the gate keeps. Both depend
on $r$ through \eqref{eq:clear}, and raising $r$ lowers both. The choice of $r$
therefore trades false clearances against lost good candidates.

\begin{definition}[Deployment precision]
\label{def:ppv}
For a candidate drawn uniformly from the pool before evaluation, the probability
that it is acceptable given that it clears is
\begin{equation}
\label{eq:ppv}
\Pi(r)=\frac{n_+\,\sigma(r)}{n_+\,\sigma(r)+n_-\,q(r)}
=\Big(1+\Omega\,\frac{q(r)}{\sigma(r)}\Big)^{-1}.
\end{equation}
\end{definition}

Equivalently, $\Pi$ is the ratio of expected acceptable clearances to expected
total clearances. It is not the realised acceptable fraction in one particular
cleared pool, which varies from sweep to sweep. Because the gate enters only
through $q/\sigma$, a precision target becomes a simple constraint:
\begin{equation}
\label{eq:rule}
\Pi(r)\ge\tau
\quad\Longleftrightarrow\quad
\frac{q(r)}{\sigma(r)}\le\frac{1-\tau}{\tau\,\Omega}.
\end{equation}
A precision target is a ceiling on the ratio of false clearance to sensitivity,
and the ceiling tightens in proportion to the pool imbalance. With $\tau=0.99$
and $\Omega=100$ the ceiling is $1.01\times10^{-4}$. Per-test error rates that
look reasonable can still miss a ceiling that low.

Two system-level quantities aid interpretation. The expected number of false
releases is $n_-q$, whatever the dependence across candidates. If candidates
clear independently, at least one good candidate survives with probability
$1-(1-\sigma)^{n_+}$. Checkpoints from one sweep share a base model, data and
recipe; if their clearance events are positively associated, this expression is
an upper bound.

\section{Choosing the Gate}
\label{sec:rule}

\begin{proposition}[Monotone precision]
\label{prop:mlr}
Suppose $0<p<1-\eta<1$, $0\le\rho<1$ and $n_+,n_->0$. Then:
(i) for $k\ge2$, $\sigma(r)/q(r)$, and hence $\Pi(r)$, is strictly increasing in
$r\in\{1,\ldots,k\}$;
(ii) for pass-all gates ($r=k$), $\Pi$ is strictly increasing in $k$;
(iii) for pass-all gates, $\Pi\to1$ as $k\to\infty$;
(iv) for pass-all gates, $\sigma\to0$ and $q\to0$ as $k\to\infty$.
\end{proposition}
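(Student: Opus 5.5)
The plan is to change variables so that both classes are driven by the same latent quantity and differ only by a shift. Set $t=\sqrt{\rho}\,\theta$ and write $\pi_{\mathcal C}=\Phi((t-c_{\mathcal C})/\sqrt{1-\rho})$. The hypothesis $p<1-\eta$ gives $c_-=\Phi^{-1}(1-p)>\Phi^{-1}(\eta)=c_+$. Hence, at the same $\theta$, $\pi_+(\theta)>\pi_-(\theta)$.

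\textbf{The case $\rho=0$.} This is the easy case. Then $\pi_{\mathcal C}$ is constant, with $\pi_+=1-\eta>p=\pi_-$. The required facts are classical: for binomial tails with $a>b$, the ratio $\Pr(\mathrm{Bin}(k,a)\ge r)/\Pr(\mathrm{Bin}(k,b)\ge r)$ is strictly increasing in $r$. This holds because the binomial family has monotone likelihood ratio in the count, and it is the discrete analogue of the argument below.

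\textbf{The case $\rho>0$.} Substitute $u=\sqrt{\rho}\,\theta-c_{\mathcal C}$ in each class integral separately, so both classes use the same kernel:
\[
G_{\mathcal C}(r)=\int T_r(u)\,f_{\mathcal C}(u)\,du,\qquad T_r(u)=\Pr\big(\mathrm{Bin}(k,\Phi(u/\sqrt{1-\rho}))\ge r\big).
\]
Here $f_{\mathcal C}$ is the $\mathcal N(-c_{\mathcal C},\rho)$ density. Since $-c_+>-c_-$, the ratio $f_+/f_-$ is strictly increasing in $u$; this is the Gaussian location family MLR.

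\textbf{Part (i).} By this representation, $\sigma(r)/q(r)$ is the ratio of $E_+[T_r]$ to $E_-[T_r]$. Comparing $r$ with $r+1$, define the tilted measure $\mu_r$ with density proportional to $T_r f_-$. Then
\[
\sigma(r)/q(r)=E_{\mu_r}[f_+/f_-].
\]
It therefore suffices to show that $\mu_{r+1}$ strictly dominates $\mu_r$ in likelihood ratio, that is, that $T_{r+1}(u)/T_r(u)$ is strictly increasing in $u$. Since $\pi=\Phi(u/\sqrt{1-\rho})$ is increasing in $u$, this reduces to showing that $\pi\mapsto\Pr(\mathrm{Bin}(k,\pi)\ge r+1)/\Pr(\mathrm{Bin}(k,\pi)\ge r)$ is strictly increasing. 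This is the main obstacle. I would prove it by noting that $\Pr(\mathrm{Bin}\ge r)$ equals a Beta$(r,k-r+1)$ distribution function evaluated at $\pi$. Equivalently, the tail has derivative proportional to $\pi^{r-1}(1-\pi)^{k-r}$. One then shows that $\pi\mapsto\Pr(\ge r)$ is TP2 in $(r,\pi)$, using that $\pi^{r-1}(1-\pi)^{k-r}$ is TP2 in $(r,\pi)$ together with the fact that integrals of TP2 kernels are TP2 (the basic composition formula). Then MLR ordering, strict increase of $f_+/f_-$, and the fact that $f_+/f_-$ is nonconstant yield strict monotonicity of the expectation. Monotonicity of $\Pi$ follows from (\ref{eq:ppv}).

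\textbf{Part (ii).} The same tilting argument applies with $T=\pi^k$ and $k\to k+1$. The ratio $T_{k+1}/T_k=\pi(u)$ is strictly increasing, so $\sigma/q$ rises strictly with $k$.

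\textbf{Parts (iii) and (iv).} For (iv), $\pi_{\mathcal C}(\theta)<1$ for every $\theta$ when $\rho<1$. So $\pi^k\to0$ pointwise, and dominated convergence gives $\sigma,q\to0$.

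For (iii), I need $q/\sigma\to0$. Write $q=E_{\mu}[\,\cdot\,]$ in the $u$-form, so that
\[
q/\sigma=\int T_k f_+\,(f_-/f_+)\,du\Big/\int T_k f_+\,du,
\]
an average of $f_-/f_+$ under a measure proportional to $T_kf_+$. For $\rho>0$, this measure concentrates at $u\to\infty$ as $k\to\infty$. Indeed, for any $M$, the mass on $u\le M$ is at most $\pi(M)^k\big/\big(\pi(M+1)^k P_+(u>M+1)\big)\to0$. On $u>M$, $f_-/f_+$ is bounded by its value at $M$, which tends to $0$ as $M\to\infty$. Thus $q/\sigma\to0$, and $\Pi\to1$. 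For $\rho=0$, $q/\sigma=(p/(1-\eta))^k\to0$ directly.
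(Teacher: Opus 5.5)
Parts (i), (ii) and (iv) are correct, and your route through (i) differs from the paper's. Part (iii) has a small hole, treated in the second paragraph.

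The paper works with the likelihood ratio $L$ of $U_{\mathcal C}=\pi_{\mathcal C}(\theta)$ on $(0,1)$. This is your $f_+/f_-$ after the change of variable $\pi=\Phi(u/\sqrt{1-\rho})$. The paper first proves the stronger pointwise fact that the pass-count ratio $a_j/b_j$ increases in $j$, by tilting $w_j\propto u^j(1-u)^{k-j}f_-$ by the elementary factor $u/(1-u)$. The tail ratio follows because $\sum_{j\ge r}a_j/\sum_{j\ge r}b_j$ is a $b$-weighted average of the $a_j/b_j$, and raising $r$ drops the smallest term. You instead tilt directly by the gate kernel $T_r$. That keeps the argument at the level of the gate and in the clean Gaussian-location coordinate, but moves the work into the lemma that $T_{r+1}/T_r$ increases in $\pi$.

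The TP2/composition argument you sketch gives only weak monotonicity, not the strict version you said suffices. It is still enough: $T_{r+1}/T_r$ rises from $0$ to $1$, so it is nonconstant, and $f_+/f_-$ is strictly increasing, which makes the covariance strictly positive. You should say this explicitly. The lemma also has a one-line proof. With $X\sim\mathrm{Bin}(k,\pi)$ and $r<k$,
\[
\frac{\Pr(X\ge r)}{\Pr(X=r)}=\sum_{j\ge r}\frac{\binom{k}{j}}{\binom{k}{r}}\Big(\frac{\pi}{1-\pi}\Big)^{j-r}
\]
is strictly increasing in $\pi$, hence so is $T_{r+1}/T_r=1-\Pr(X=r)/\Pr(X\ge r)$.

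The gap is in (iii). You write $q/\sigma$ as the mean of $f_-/f_+$ under $\nu_k\propto T_kf_+$, show $\nu_k(u\le M)\to0$, and bound $f_-/f_+$ on $u>M$. But $f_-/f_+\to\infty$ as $u\to-\infty$. Vanishing $\nu_k$-mass on $\{u\le M\}$ therefore does not by itself make that region's contribution to the mean vanish.

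Your own estimate closes the gap. The contribution is $\int_{u\le M}T_kf_-\,du\big/\int T_kf_+\,du$. Since $f_-$ integrates to one, this is at most $\pi(M)^k\big/\big(\pi(M+1)^kP_+(u>M+1)\big)\to0$. Hence $\limsup_k q/\sigma\le (f_-/f_+)(M)$ for every $M$, and letting $M\to\infty$ gives the result.

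The paper sidesteps the issue by bounding $\sigma/q$ from below instead. It writes $\sigma/q$ as the mean of the nonnegative $L$ under $v_k\propto u^kf_-$. Then $\sigma/q\ge M\,v_k(L>M)\to M$, and the low region can simply be discarded.
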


\begin{proof}
For $\rho=0$ the ratio of the two binomial mass functions at count $j$ is
$[(1-\eta)/p]^{j}[\eta/(1-p)]^{k-j}$, which increases in $j$ because
$(1-\eta)(1-p)>p\eta$; the pass-all ratio $[(1-\eta)/p]^k$ grows without bound.
For $0<\rho<1$ let $U_{\mathcal C}=\pi_{\mathcal C}(\theta)$ have density
$f_{\mathcal C}$ on $(0,1)$. Substituting $z=\Phi^{-1}(u)$ into the normal
density gives
\[
L(u):=\frac{f_+(u)}{f_-(u)}
=\exp\Big\{\frac{(c_--c_+)\sqrt{1-\rho}}{\rho}\,z
             +\frac{c_-^2-c_+^2}{2\rho}\Big\},
\]
which is strictly increasing and tends to $\infty$ as $u\to1$, because
$c_->c_+$ when $p<1-\eta$. Let $a_j$ and $b_j$ be the probabilities of pass
count $j$ in the acceptable and unacceptable classes. Then $a_j/b_j$ is the mean
of $L(U)$ under the density $w_j\propto u^j(1-u)^{k-j}f_-(u)$. Moving from $j$
to $j+1$ tilts $w_j$ by the strictly increasing factor $h(u)=u/(1-u)$, and
\[
\mathbb E_{w_{j+1}}[L]-\mathbb E_{w_j}[L]
=\frac{\operatorname{Cov}_{w_j}(h,L)}{\mathbb E_{w_j}[h]}>0,
\]
since $2\operatorname{Cov}(h(U),L(U))=\mathbb E[(h(U)-h(V))(L(U)-L(V))]>0$ for
independent copies $U,V$ of a continuous law. So $a_j/b_j$ increases strictly.
The tail ratio $\sum_{j\ge r}a_j/\sum_{j\ge r}b_j$ is a $b$-weighted average of
these ratios; dropping its smallest term, $j=r$, strictly increases it, which
proves (i). For (ii), $\sigma/q$ is the mean of $L$ under
$v_k\propto u^kf_-(u)$, and $v_{k+1}$ tilts $v_k$ by $u$; the same covariance
argument applies. For (iii), fix $M$ and choose $u_M$ with $L>M$ on $(u_M,1)$.
For any $u'\in(u_M,1)$ the $v_k$-mass of $(0,u_M]$ is at most
$(u_M/u')^k/\Pr(U_->u')\to0$, so $\liminf_k \sigma/q\ge M$. As $M$ is
arbitrary, $q/\sigma\to0$ and $\Pi\to1$ by \eqref{eq:ppv}. For (iv),
$\sigma=\mathbb E[U_+^k]$ and $q=\mathbb E[U_-^k]$ with $U_{\mathcal C}<1$
almost surely, so both tend to zero by dominated convergence (for $\rho=0$,
$\sigma=(1-\eta)^k$ and $q=p^k$).
\end{proof}

Part (i) is a monotone-likelihood-ratio property in the classical sense
\cite{karlin1956}; the proof specialises it to the one-factor model. It also
has an awkward consequence. If a team maximises precision alone, it
always picks pass-all, however many good candidates that discards. Near-identical
precision values can hide very different retention. We therefore select
\begin{equation}
\label{eq:objective}
r^{*}(\tau)=\min\{r:\Pi(r)\ge\tau\},
\end{equation}
the most lenient gate that meets target $\tau$. Sensitivity can only fall as $r$
rises, so $r^*$ keeps the most good candidates among all deterministic
$r$-of-$k$ gates that meet the target. That
argument needs no monotone precision; under common correlation, part (i) adds
that the feasible gates form a contiguous range $\{r^*,\ldots,k\}$. When no gate
qualifies, the rule reports infeasibility instead of defaulting to the strictest
gate.

Parts (iii) and (iv) describe what happens when a pass-all suite grows. For any
$\tau<1$, a large enough suite meets the precision target, provided more tests
with the same error rates and dependence exist. Its sensitivity, however, tends
to zero. For fixed $n_+$ and whatever the dependence between candidates, the
union bound then gives
$\Pr(\text{some acceptable candidate clears})\le n_+\sigma\to0$. A pass-all
suite can thus become almost perfectly precise and almost useless. Adding tests under pass-all cannot remedy a violated minimum-retention requirement.

Vanishing retention is a property of pass-all, not of large suites. With
independent tests, any pass fraction $t\in(p,1-\eta)$ with $r/k\to t$ drives
false clearance to zero exponentially fast while sensitivity tends to one. The
question is how much of that freedom survives dependence. Each $(r,k)$ evaluation is a single one-dimensional
integral, and the search over $r$ is discrete, so the full design is cheap to
compute.

\section{What a Stricter Gate Buys, and What It Costs}
\label{sec:findings}

All calculations in this section share one configuration: miss rate $p=0.3$,
false-alarm rate $\eta=0.05$, and a pool of 1,000 unacceptable against 10
acceptable candidates ($\Omega=100$), with target $\tau=0.99$. We computed
\eqref{eq:clear} by quadrature on a dense grid in log space and confirmed the
extreme cases with adaptive high-precision quadrature.

\begin{table}[t]
\centering
\footnotesize
\setlength{\tabcolsep}{5pt}
\caption{Most lenient gate $r^{*}$ meeting 99\% precision, with its sensitivity
$\sigma(r^*)$ in parentheses. $p=0.3$, $\eta=0.05$, $\Omega=100$. A dash means
no gate meets the target.}
\label{tab:optr}
\begin{tabular}{ccccc}
\toprule
$\rho$ & $k=10$ & $k=20$ & $k=40$ & $k=80$\\
\midrule
$0.0$ & $10\ (0.599)$ & $15\ (1.000)$ & $24\ (1.000)$ & $41\ (1.000)$\\
$0.1$ & --- & $19\ (0.740)$ & $34\ (0.968)$ & $64\ (0.993)$\\
$0.3$ & --- & --- & --- & $80\ (0.282)$\\
$0.5$ & --- & --- & --- & ---\\
\bottomrule
\end{tabular}
\end{table}

\paragraph*{Finding 1: correlated tests force stricter gates}
Table~\ref{tab:optr} fixes the suite and finds the most lenient qualifying gate.
With independent tests the gate relaxes as the suite grows: 15 of 20, then 24 of
40, then 41 of 80, each keeping essentially every good candidate. The required
pass fraction drifts down toward $p$; it reaches $0.43$ at $k=200$ and $0.32$ at
$k=10^4$. At $\rho=0.1$ lenient gates still appear, but later and at some cost in
sensitivity. At $\rho=0.3$ no suite up to 40 tests qualifies, and at 80 tests only
pass-all does, discarding $72\%$ of the good candidates. At $\rho=0.5$ nothing
in the table qualifies. As correlation grows, the freedom to tolerate partial
failures shrinks and then disappears.

\begin{table}[t]
\centering
\footnotesize
\setlength{\tabcolsep}{3.5pt}
\caption{Gate trade-off at $k=20$ (same configuration, upper half of the gate
range). Under independence precision rounds to 1.000 from $r=17$; those gates
differ in precision only beyond three decimals, but differ markedly in
sensitivity. At $\rho=0.3$ even pass-all misses
the target.}
\label{tab:gate}
\begin{tabular}{rccc@{\hspace{8pt}}ccc}
\toprule
& \multicolumn{3}{c}{$\rho=0$} & \multicolumn{3}{c}{$\rho=0.3$}\\
\cmidrule(lr){2-4}\cmidrule(l){5-7}
$r$ & $q$ & $\sigma$ & $\Pi$ & $q$ & $\sigma$ & $\Pi$\\
\midrule
14 & $2.6{\times}10^{-4}$  & 1.000 & 0.975 & $6.6{\times}10^{-2}$ & 0.983 & 0.129\\
15 & $4.3{\times}10^{-5}$  & 1.000 & 0.996 & $4.5{\times}10^{-2}$ & 0.972 & 0.178\\
16 & $5.6{\times}10^{-6}$  & 0.997 & 0.999 & $2.9{\times}10^{-2}$ & 0.954 & 0.250\\
17 & $5.4{\times}10^{-7}$  & 0.984 & 1.000 & $1.7{\times}10^{-2}$ & 0.923 & 0.356\\
18 & $3.8{\times}10^{-8}$  & 0.925 & 1.000 & $8.6{\times}10^{-3}$ & 0.870 & 0.503\\
19 & $1.7{\times}10^{-9}$  & 0.736 & 1.000 & $3.6{\times}10^{-3}$ & 0.768 & 0.683\\
20 & $3.5{\times}10^{-11}$ & 0.358 & 1.000 & $9.4{\times}10^{-4}$ & 0.555 & 0.856\\
\bottomrule
\end{tabular}
\end{table}

Table~\ref{tab:gate} shows the mechanism at 20 tests. Without correlation, each
extra required pass cuts false clearance by a factor of six or more while
sensitivity barely moves, until the last few gates. With $\rho=0.3$, each extra
pass buys a factor between 1.5 and 4, because a flaw that evades several tests
tends to evade the rest. A single ``effective number of independent tests''
generally cannot reproduce both class-specific clearance probabilities at once:
correlation changes $q$ and $\sigma$ by different amounts, and only their ratio
matters. Nor does dependence always destroy information. Two classes
that emit $\{00,11\}$ and $\{01,10\}$ uniformly have identical per-test rates,
yet whether the two outcomes agree separates them perfectly. Our results describe
the one-factor family, not dependence in general.

\begin{figure}[t]
\centering
\includegraphics[width=\columnwidth]{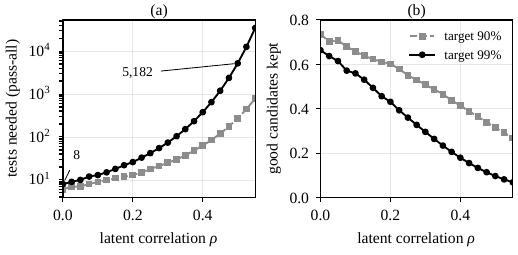}
\caption{The price of dependence under a pass-all gate ($p=0.3$, $\eta=0.05$,
$\Omega=100$). (a) Suite size needed to reach the precision target, on a log
scale. (b) Share of acceptable candidates the gate keeps at that size. Markers
are exact integer minimum-suite calculations; small reversals reflect integer
suite sizes.}
\label{fig:price}
\end{figure}

\paragraph*{Finding 2: more tests restore precision but not retention}
When only pass-all qualifies, the remaining lever is suite size.
Proposition~\ref{prop:mlr}(iii) guarantees that a large enough suite reaches
any target below one; Fig.~\ref{fig:price} and Table~\ref{tab:cost} show the bill.
The suite needed for 99\% precision grows from 8 tests at $\rho=0$ to 74 at
$\rho=0.3$ and 5,182 at $\rho=0.5$, and the growth accelerates as $\rho$
rises. Across the rows of Table~\ref{tab:cost}, sensitivity falls steadily. The 5,182-test suite does reach 99\%
precision, but it releases fewer than one good candidate in ten. Adding tests
buys back precision; it does not buy back the good candidates that pass-all
throws away.

\begin{table}[t]
\centering
\footnotesize
\setlength{\tabcolsep}{6pt}
\caption{Pass-all suite size $k$ needed to reach a precision target, and the
sensitivity $\sigma$ at that size. Same configuration as
Table~\ref{tab:optr}; $\rho$ is the within-class latent correlation of
Remark~\ref{rem:whatrho}.}
\label{tab:cost}
\begin{tabular}{crcrc}
\toprule
& \multicolumn{2}{c}{$\Pi\ge0.90$} & \multicolumn{2}{c}{$\Pi\ge0.99$}\\
\cmidrule(lr){2-3}\cmidrule(l){4-5}
$\rho$ & $k$ & $\sigma$ & $k$ & $\sigma$\\
\midrule
$0.0$ & $6$   & $0.735$ & $8$       & $0.663$\\
$0.1$ & $9$   & $0.657$ & $13$      & $0.559$\\
$0.2$ & $13$  & $0.602$ & $26$      & $0.431$\\
$0.3$ & $25$  & $0.508$ & $74$      & $0.296$\\
$0.4$ & $63$  & $0.413$ & $377$     & $0.179$\\
$0.5$ & $272$ & $0.315$ & $5{,}182$ & $0.097$\\
\bottomrule
\end{tabular}
\end{table}

The two tables answer complementary questions. Table~\ref{tab:optr} fixes the
suite and asks for the most lenient qualifying gate. Table~\ref{tab:cost} fixes
the gate at pass-all and asks how large the suite must be. Where the first
reports infeasibility, the second prices its removal.

\section{A Release Decision Under Uncertain Dependence}
\label{sec:case}

Consider a team whose training sweep has produced 500 checkpoints. Experience
suggests that about 2\% genuinely meet the deployment criterion, so $n_+=10$ and
$n_-=490$ ($\Omega=49$). The suite has $k=12$ benchmarks. The team estimates
that a benchmark misses a flawed checkpoint 30\% of the time ($p=0.3$) and
wrongly fails a good one 5\% of the time ($\eta=0.05$). It judges the benchmarks
moderately correlated and assumes $\rho=0.2$. Choosing one winner among the
cleared checkpoints is a separate step, outside this calculation.

\begin{table}[t]
\centering
\footnotesize
\setlength{\tabcolsep}{4.5pt}
\caption{Worked case: 500 checkpoints, $k=12$, $p=0.3$, $\eta=0.05$,
$n_+=10$, $n_-=490$. $F=n_-q$ is the expected number of false releases. Left:
the assumed $\rho=0.2$. Right: the same gates if the true $\rho$ is $0.4$.}
\label{tab:case}
\begin{tabular}{rccc@{\hspace{8pt}}ccc}
\toprule
& \multicolumn{3}{c}{assumed $\rho=0.2$} & \multicolumn{3}{c}{actual $\rho=0.4$}\\
\cmidrule(lr){2-4}\cmidrule(l){5-7}
$r$ & $\sigma$ & $\Pi$ & $F$ & $\sigma$ & $\Pi$ & $F$\\
\midrule
9  & 0.981 & 0.361 & 17.4 & 0.961 & 0.182 & 43.2\\
10 & 0.948 & 0.569 & 7.2  & 0.928 & 0.265 & 25.7\\
11 & 0.860 & 0.792 & 2.3  & 0.861 & 0.402 & 12.8\\
12 & 0.621 & 0.937 & 0.4  & 0.694 & 0.616 & 4.3\\
\bottomrule
\end{tabular}
\end{table}

Under the team's assumption (Table~\ref{tab:case}, left), a 99\% target is out of
reach at 12 tests: the best available precision is 0.937, at pass-all. By rule
\eqref{eq:objective} the design is infeasible: the target rules out every
available gate. Pass-all maximises precision among them, with 0.4 expected false
releases and, for independent candidates, a 0.9999 chance that at least one good
checkpoint survives, but it still fails the stipulated requirement. Relaxing to
$r=11$ keeps more good checkpoints ($\sigma=0.860$ versus 0.621) but drops
precision to 0.792 and raises expected false releases to 2.3. Shipping under
pass-all therefore means accepting a lower standard, and the team should make
that choice explicitly rather than treat the convention as meeting the target.

The right half of the table shows why the calculation is worth running. If the
true correlation is 0.4 rather than 0.2, the same pass-all gate delivers
precision 0.616, and expected false releases rise from 0.4 to 4.3, nearly
tenfold, from a factor-of-two error in a parameter the team never measured.
That error moves precision by 0.32; moving the gate from 12 to 11 at the
assumed correlation moves it by 0.15. Suite sizing is more fragile still: a
pass-all suite reaching 99\% precision needs 22 tests if $\rho=0.2$ but 216 if
$\rho=0.4$. Before choosing between a stricter gate, a larger suite or a
different set of tests, the team should measure dependence.

\section{From Calculation to Certificate}
\label{sec:certify}

\subsection{What data can identify}

The model has four inputs: $p$, $\eta$, $\rho$ and $\Omega$. Unlabelled
leaderboard score matrices identify none of them without further assumptions.
The error rates need labels saying which candidates meet the criterion, and
$\rho$ must be estimated within those labelled classes. Across a mixed population
of models, variation in overall quality can inflate observed benchmark correlation,
so a factor fitted to such a matrix need not measure shared evasiveness within
the unacceptable class. Public scores still help with model checking: raw and
residual correlations show how much shared variation exists, and latent-trait
models fitted by marginal maximum likelihood \cite{bock1981} can be checked for
structure that one factor misses. If an
independent held-out criterion is available, one can split candidates by it and
estimate $p$, $\eta$ and $\rho$ within each group; transferring those estimates
to true acceptability requires separate evidence that the proxy tracks it.

\subsection{Robust design}

A grid of plausible parameters is a sensitivity analysis. It becomes a guarantee
only when the set it covers has justified coverage. Let
$\vartheta=(p,\eta,\rho_-,\rho_+,\Omega)$, now allowing different correlations in
the two classes, and let $\mathcal U$ be a data-dependent uncertainty set with
$\Pr(\vartheta_0\in\mathcal U)\ge1-\delta$ for the true parameter $\vartheta_0$.
Define
\[
\underline\Pi_{\mathcal U}(r)=\inf_{\vartheta\in\mathcal U}\Pi_\vartheta(r),
\qquad r^*_{\mathcal U}=\min\{r:\underline\Pi_{\mathcal U}(r)\ge\tau\}.
\]
With probability at least $1-\delta$, every gate with
$\underline\Pi_{\mathcal U}(r)\ge\tau$ meets the target within the model family.
Sensitivity falls with $r$ under every parameter value, so $r^*_{\mathcal U}$
keeps the most good candidates among robust-feasible gates. With $\rho_-\ne\rho_+$
precision need not be monotone in $r$, so all gates should be searched. An empty
feasible set means that no gate is certified under this uncertainty set; it
does not show that the target is impossible.

\subsection{Direct certification without a Gaussian model}

The model can be bypassed entirely. Draw $N_-$ unacceptable and $N_+$
acceptable labelled candidates, i.i.d.\ within each class and representative of
the candidates the gate will face. Checkpoints from one training run are
correlated and do not automatically count as independent observations. The
suite, the benchmark thresholds and a shortlist of $J$ gates must be fixed
independently of this sample. For each gate compute a one-sided Clopper--Pearson \cite{clopper1934}
upper bound $U_-(r)$ on $q(r)$ and lower bound $L_+(r)$ on $\sigma(r)$, each with
one-sided noncoverage probability at most $\delta/(2J)$. By the union bound all $2J$ bounds hold simultaneously with
probability at least $1-\delta$, even though they share one validation sample.
Because $\Pi$ increases in $\sigma$ and decreases in $q$, for known $\Omega$
\begin{equation}
\label{eq:certprecision}
\underline\Pi(r)=\frac{L_+(r)}{L_+(r)+\Omega\,U_-(r)}
\end{equation}
is a simultaneous lower bound on precision (set to zero if the denominator
vanishes). If $\Omega$ is itself estimated, substitute an upper confidence bound
and spend part of $\delta$ on it; class-stratified samples do not estimate
prevalence. This is a simple instance of the learn-then-test principle
\cite{angelopoulos2025}: family-wise error control over a finite set of settings
permits a data-driven choice among those certified.

Choosing the smallest $r$ with $\underline\Pi(r)\ge\tau$ keeps the guarantee
despite the search over the shortlist, and among certified gates it keeps the
most good candidates. It may still be stricter than the population-optimal gate.
The data burden is severe. With zero observed false clearances, the exact upper
bound with noncoverage probability $\epsilon$ is $1-\epsilon^{1/N_-}$, and
certification requires
$1-\epsilon^{1/N_-}\le L_+(1-\tau)/(\tau\Omega)$. Take $\delta=0.05$ and
$J=20$, so $\epsilon=\delta/(2J)=0.00125$, with $\tau=0.99$ and $\Omega=100$.
If the sensitivity lower bound is $L_+=0.30$, approximately 221,000 unacceptable
validation candidates are needed; if $L_+=0.097$, approximately 682,000 are
needed. These calculations assume that the stated sensitivity lower bound has
already been established from the acceptable validation sample. Pooling across releases helps only if class-conditional
clearance probabilities stay stable. Two further cautions apply. Re-checking a fixed-sample
bound as data accumulate does not give an anytime-valid guarantee. And a bound on
population precision does not cover a winner chosen afterwards by another rule.

\section{Limitations}
\label{sec:limits}

Every quantitative statement in Sections~\ref{sec:findings}
and~\ref{sec:case} is conditional on the per-test error rates and the dependence
structure; Section~\ref{sec:certify} is the route to guarantees. We assume common
error rates across tests. Test-specific $p_j$ and $\eta_j$ replace the binomial
tail in \eqref{eq:clear} by a Poisson--binomial tail, which is easy to compute
but not explored here. A single factor imposes equicorrelation; real suites
can have several partly shared failure channels, such as tests that probe the
same capability or reuse the same data source, and a multi-factor version
requires multi-dimensional integration. Finally, the survival probability
$1-(1-\sigma)^{n_+}$ assumes independent candidates, while $q$, $\sigma$, $\Pi$
and the expected false-release count do not.

\section{Conclusion}

A release gate should be judged on two numbers: how trustworthy its clearances
are, and how many good models it throws away. Maximising the first alone always
leads to pass-all, because in the common-correlation model every extra required
pass raises precision. Among $r$-of-$k$ gates that meet the target, the most
lenient keeps the most good models; when no gate meets it, the rule says so
instead of defaulting to pass-all. Under pass-all gating, any precision target
below one is attainable within the model, but sensitivity tends to zero as the
suite grows.

Correlation between tests decides how expensive that target is. In our
configuration, raising the latent correlation from 0 to 0.5 turns an 8-test
pass-all problem into a 5,182-test one and leaves the gate keeping fewer than one good
model in ten. In an illustrative sweep, misjudging that correlation by a factor of
two costs more than moving the gate by one test. Teams should therefore measure
dependence within classes, not just per-test accuracy.

Model calculations show which designs are plausible. Labelled validation data,
collected under the stated sampling assumptions, can certify a population
precision target, and the simultaneous binomial bound lets a team pick a gate
from a shortlist without losing that guarantee. When no gate certifies, the
evidence does not establish that any shortlisted gate meets the target; the
cause may be too little data, a gate that truly falls short, or conservative
bounds. The data may not reveal which. A release decision is easier to defend
when it states what the evidence does establish and what it leaves open.

\end{document}